\documentclass{article} 
\usepackage{iclr2026_conference,times}


\usepackage{amsmath,amsfonts,bm}









\def\eqref#1{equation~\ref{#1}}









\def\1{\bm{1}}










\DeclareMathAlphabet{\mathsfit}{\encodingdefault}{\sfdefault}{m}{sl}
\SetMathAlphabet{\mathsfit}{bold}{\encodingdefault}{\sfdefault}{bx}{n}













\usepackage{graphicx}
\usepackage{amsmath,amssymb,amsthm}
\usepackage[utf8]{inputenc} 
\usepackage[T1]{fontenc}    
\usepackage{hyperref}       
\usepackage{url}            
\usepackage{booktabs}       
\usepackage{amsfonts}       
\usepackage{nicefrac}       
\usepackage{microtype}      
\usepackage{xcolor}         
\usepackage[table]{xcolor}
\usepackage{subcaption}
\definecolor{SecondFill}{HTML}{DDEBFF} 
\definecolor{ThirdFill}{HTML}{FFF0D6}  
\newcommand{\second}[1]{\cellcolor{SecondFill}{#1}}        
\newcommand{\third}[1]{\cellcolor{ThirdFill}{#1}}

\newcommand{\passone}{\text{Pass@1}}
\newcommand{\passk}{\text{Pass@}k}
\newcommand{\covtau}{\text{Cover@$\tau$}}
\hypersetup{unicode=true}
\DeclareRobustCommand{\covtauunicode}{%
  \texorpdfstring{Cover@\(\tau\)}{Cover@τ}
}
\newtheorem{definition}{Definition}

\newtheorem{proposition}{Proposition}
\newtheorem{corollary}{Corollary}
\newtheorem{remark}{Remark}

\title{Beyond $\passk$: Breadth-Depth Metrics
for Reasoning Boundaries}


\author{Marius Dragoi, Ioana Pintilie, Florin Gogianu \& Florin Brad \\
Bitdefender, Romania\\
\texttt{\{mdragoi,ipintilie,fgogianu,fbrad\}@bitdefender.com} \\
}

%

\iclrfinalcopy 
\begin{document}

\maketitle

\begin{abstract}
  Reinforcement Learning with Verifiable Rewards (RLVR) has emerged as a powerful paradigm to improve Large Language Models on reasoning tasks such as coding, math or logic. To assess the reasoning boundary (the fraction of problems a model can solve) researchers often report $\passk$ at large sampling budgets. Recent results reveal a crossover phenomenon: while RLVR models outperform the base model at small k values, the base model usually outperforms them when sampling a very large number of completions. This has been interpreted as evidence that base models have a larger reasoning boundary. We argue that on tasks with discrete answer spaces, such as math with numeric outputs, $\passk$ at large k reflects the increasingly higher chance of success in the limit of the number of trials rather than genuine reasoning, and can therefore be misleading. We propose $\covtau$, which measures the fraction of problems that a model can solve for which at least a $\tau$ proportion of completions are correct. Unlike $\passk$, $\covtau$ captures reasoning under an explicit reliability threshold: models that rely on random guessing degrade rapidly as $\tau$ increases. We evaluate several RLVR models using $\covtau$-based metrics and illustrate how the relative rankings of popular algorithms change compared to $\passone$, offering a different perspective on reasoning boundaries.
\end{abstract}

\section{Introduction}

 Reinforcement Learning with Verifiable Rewards \citep{guo2025deepseek} has become an essential post-training approach for improving the capability of LLMs on math, code and logical reasoning. Recent research \citep{wang2025reinforcement,wu2025invisible} has called into question the extent to which the RLVR models truly expand the reasoning boundary (i.e., extending the scope of solvable tasks). These works report $\passk$ for increasing values of k and the reasoning boundary is defined as $\passk$ at large k. While the RLVR model has higher Pass@1, at large k, the base model eventually outperforms it, thus the reasoning boundary of the base model shrinks as a result of applying RLVR. Follow-up works use this crossover plot to illustrate whether RLVR expands the reasoning boundary, showing that it can expand or shrink depending on the domain \citep{liu2025prorl,cheng2025revisiting}.

 For tasks with numerical answers, such as math, $\passk$ at large k may eventually produce correct answers for all problems, due to random guessing rather than reasoning ability. We thus argue that $\passk$ can be problematic as a measure for the reasoning boundary, because it doesn't account for any level of reliability, for any given problem. We thus complement it with a \textit{reliability-controlled reasoning boundary}, that exposes the reliability level explicitly. Concretely, we define $\covtau$, which measures the fraction of problems solved by a model with success rate at least $\tau$. For very small $\tau$, $\covtau$ behaves similarly to $\passk$, however, increasing $\tau$ tightens the criterion to emphasize consistency over chance. Figure~\ref{fig:pass_at_k_example} highlights the behavior of $\passk$ and $\covtau$ for a math dataset from OMEGA \citep{sun2025omega} with numerical answers and small set support.

 Our contributions are as follows:
\begin{itemize}
    \item we introduce $\covtau$, a reliability-thresholded metric that reveals the reasoning abilities under a different reliability level $\tau$. $\covtau$ offers a more informative view, highlighting an \textit{explicit breadth-depth trade-off}: low $\tau$ captures breadth of problem solving (even if unreliable), while larger $\tau$ captures depth (problem solving with high reliability). Measuring $\covtau$ reveals a different ranking of popular RLVR algorithms when compared to Pass@1 or $\passk$ at large k; this shows a complementary perspective of the model capabilities
    \item we demonstrate that $\passk$ is a weighted average of $\covtau$, with weights from a Beta(1,k) distribution; this reveals that $\passk$ is biased towards low-$\tau$ regions of $\covtau$, emphasizing lucky hits rather than reliability
    \item we illustrate the usefulness of $\covtau$ by characterizing the performance of different RLVR methods trained on math datasets from OMEGA and Reasoning Gym.
\end{itemize}

\begin{figure}[t]
  \centering
\includegraphics[width=\linewidth]{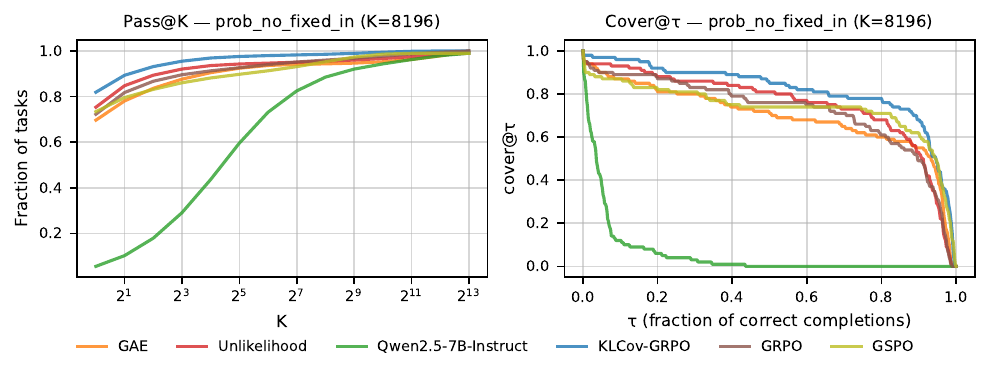}
  \caption{$\passk$ and $\covtau$ curves for Qwen2.5-7B-Instruct and several RLVR models on the Probability set of OMEGA. \textbf{Left:} $\passk$ quickly saturates for larger $k$ due to small test support. \textbf{Right:} $\covtau$ illustrates a more gradual assessment of the models' capabilities, ranging from maximum performance (at low $\tau$ values) to very limited capabilities (when requiring models have almost perfect reliability at high $\tau$ values).}
  \label{fig:pass_at_k_example}
\end{figure}

\section{Related work}
DeepSeekR1 \citep{guo2025deepseek} has paved the way for reasoning-focused LLMs, by finetuning LLMs with reinforcement learning from verifiable rewards. The model employs the Group Relative Policy Optimization (GRPO) algorithm \citep{shao2024deepseekmath}, which is a critic-free variant of PPO \citep{schulman2017proximal}, that estimates the baseline from the group average rewards. 

\subsection{Evaluation metrics}
Models are usually evaluated via $\passk$ \citep{chen2021evaluating}, using $k=1$ as the most common value. Higher values of $k$ (e.g. $16$ or $64$) are also used to test the upper bound capabilities of the models. 
Recent work argues that the relative decrease in reasoning capacity of RLVR models compared to the base model is an artefact of $\passk$ and proposes CoT-Pass@k \citep{wen2025reinforcement} to account for the correctness of both the thinking tokens and the final answer.
Finally, some papers report maj@k or cons@k \citep{guo2025deepseek,shao2024deepseekmath}, which counts a problem as solved based on aggregation over 
k samples, either majority (>50\% of completions correct) or mode (most frequent). Both metrics target whether a model is consistent and are related to our proposal. Maj@k is equivalent to $\covtau$ with $\tau=0.5$. Cons@k has no fixed reliability threshold: the effective $\tau$
varies per problem with the mode’s frequency, so it does not correspond to a single $\covtau$. Our $\covtau$ is structurally similar to the performance profiles advocated by \cite{agarwal2021deep}, where the fraction of solved games is reported relative to varying human-normalized score levels. The key distinction is we evaluate coverage under varying reliability levels using a fixed (and sufficiently large) sample budget of $k$ completions, without normalizing to human performance.

\cite{liu2024your} et al. introduce G-Pass$@k_\tau$, a generalization of $\passk$ that evaluates the capability of the model under different consistency levels $\tau$. Our metric $\covtau$ was developed independently and, while conceptually similar, differs in formulation and intended use. First, our metric assumes that tasks have different per-trial success probabilities $p_i$, estimated directly from the n completions of each task, eliminating the need to set a separate k parameter. Second, we introduce our metric in the context of the reasoning boundary debate, whereas \cite{liu2024your} use G-Pass$@k_\tau$ primarily to assess model-level stability under repeated sampling. To this end, G-Pass$@k_\tau$ focuses on higher reliability levels ($\tau \in [0.5,1.0]$), while we use $\covtau$ to analyze the breadth-depth trade-off, by evaluating performance at both low ($\tau=0.2$) and high ($\tau=0.8$) reliability thresholds.

\subsection{Exploration-promoting methods}
Several shortcomings in GRPO have been reported, notably optimization biases \citep{liu2025understanding,zheng2025group} and entropy collapse \citep{yu2025dapo}. The latter is particularly problematic, because it reduces exploration and quickly saturates performance \citep{cui2025entropy}. Follow-up work addresses the entropy collapse and encourages exploration. Simpler approaches add an explicit entropy loss term \citep{wang2025reinforcement} or increase the upper clipping threshold to favor low-probability exploration tokens \citep{yu2025dapo}. More fine-grained techniques such as KL-cov \citep{cui2025entropy} suppress the high-covariance tokens that correlate with large decreases in entropy. Other works identify forking tokens \citep{wang2025beyond,cheng2025reasoning}, which are high entropy tokens serving as logical connectors. Restricting policy gradient updates to these high-entropy tokens maintains entropy and enhances exploration. 

Another line of work boosts exploration and {\passk} performance by optimizing for {\passk} directly \citep{chen2025pass} or promoting low-probability solutions \citep{he2025rewarding}.

\subsection{Procedurally generated datasets}
Progress of RLVR methods on math reasoning has been questioned, due to potential contamination in popular static benchmarks. Portion of these datasets may appear in the pretraining data of popular LLMs \citep{wu2025reasoning}, which can conflate reasoning abilities with memorization. To limit contamination concerns, recent math and logic datasets are procedurally generated and designed with increasing difficulty and structural variation \citep{sun2025omega,stojanovski2025reasoning}. 

\section{{\passk} can be misleading}

We consider $T$ tasks, where task $i$ has per-trial success probability $p_i \in [0,1]$ under i.i.d.\ trials. $\passk$ is defined as the average probability that a task is solved within k independent attempts.
\begin{definition}[$\passk$]\label{def:pak}
\[
\mathrm{\passk} \;=\; \frac{1}{T}\sum_{i=1}^T \Big(1 - (1-p_i)^k\Big).
\]
\end{definition}

$\passk$ has been used to assess the reasoning boundaries of RLVR models by comparing the performance of a finetuned model against the base model at varying values of $k$. At large $k$ values, the performance of the base model typically approaches or even surpasses that of the finetuned model, seemingly closing any gap that may exist at small $k$ values. This pattern, however, does not necessarily reflect genuine reasoning ability. Instead, it primarily highlights the diversity of output trajectories in the base model. In the mathematical reasoning setting, where tasks often involve numerical answers with very limited support, this effect can create a misleading impression about model reasoning.

Figure \ref{fig:pass_at_k_example} illustrates this point by plotting $\passk$ curves for the base model and several RLVR variants on the Probability (No Fixed) task from the OMEGA dataset. Given enough trials ($2^{13}$), the base model achieves $\passk = 1$, due to the very limited size of the output space (only 30 possible values in the test set). More generally, this phenomenon is inevitable: provided there is nonzero probability of producing the correct answer, $\passk$ will always converge to 1 in the limit of infinite trials.

\begin{remark}[Degeneracy of $\passk$ at Large $k$]\label{remark:degen}
For any success probability $0 < p \leq 1$,
\[
\lim_{k \to \infty} \big(1 - (1-p)^k\big) = 1.
\]
\end{remark}

We thus argue that using $\passk$ as a proxy for reasoning boundaries can be misleading, because it confounds true capability with random chance.

\section{{\covtauunicode}}
Whereas $\passk$ captures the binary likelihood of success within $k$ attempts, we propose examining how dataset coverage changes when accounting for the consistency of predictions. We define $\covtau$ as fraction of problems for which at least a proportion $\tau$ of the generated completions are correct. Formally:

\begin{definition}[$\covtau$]\label{def:cov}
For any threshold $\tau \in [0,1]$, define
\[
G(\tau) \;=\; \frac{1}{T}\sum_{i=1}^T \mathbf{1}\{p_i \geq \tau\},
\]
i.e. the fraction of tasks that can be solved with per-trial success probability at least $\tau$.
\end{definition}

Consider two LLMs, tested on the same set of problems: A has probability of success of 0.5 on all problems, while B has probability of success 0 on half the problems and 1 on the other half. Both models have $Pass@1 = 0.5$, but the \textbf{$\covtau$} plot in the right side of Figure \ref{fig:cover_toy_example} also shows model performances at explicit reliability levels: model A solves more problems, while model B solves fewer problems, but more consistently. Thus, $\covtau$ captures a fine-grained view of the performance regime of the models:  
\begin{itemize}
\item low $\tau$ values: $\covtau$ highlights the \textbf{breadth} of the capabilities (how many problems are at least sometimes solvable)
\item higher $\tau$ values: $\covtau$ indicates \textbf{depth} (how reliably a problem is solved)
\end{itemize}

\begin{figure}[t]
  \centering
  \includegraphics[width=1.0\linewidth]{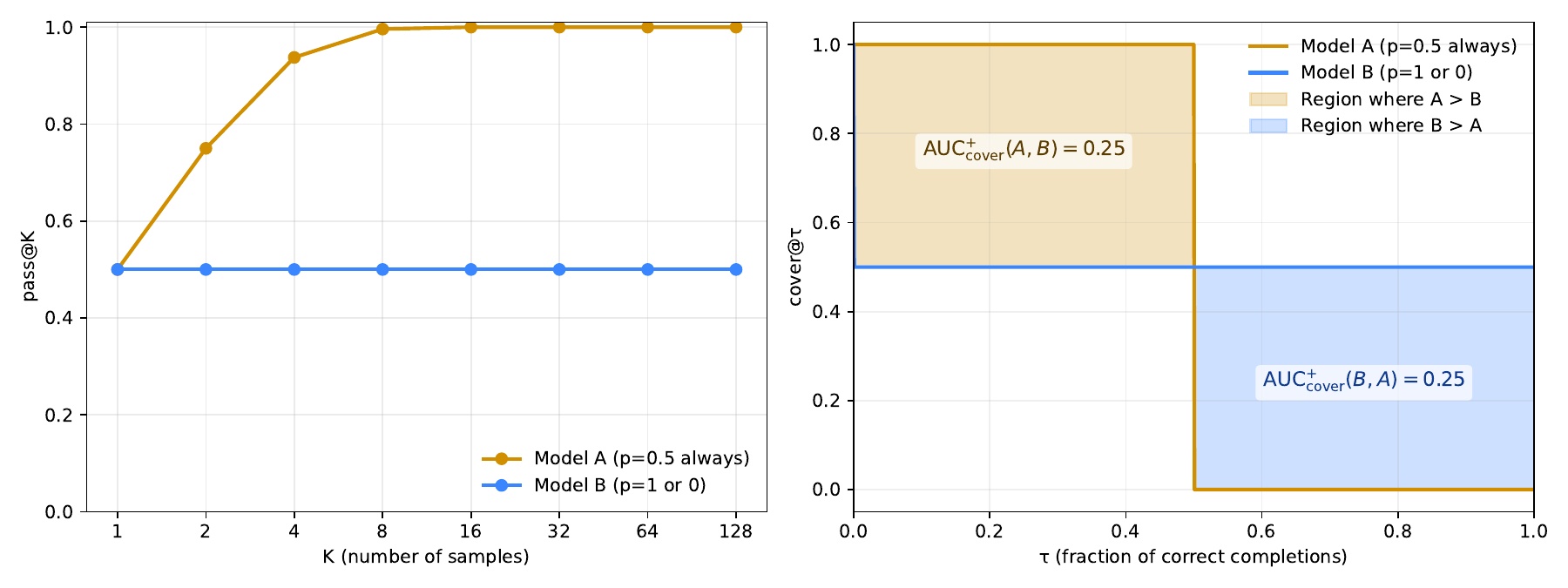}
  \caption{\textbf{Left:} pass@K for two models A and B. Both have the same pass@1=0.5, but model A's reasoning boundary increases with more tries, while model B stay flat. \textbf{Right:} $\covtau$ curves for the same models A and B. Model A solves more problems overall, while Model B solves fewer problems but with higher consistency. When comparing their excess AUC (areas where each curve dominates), their overall advantages balance out.}
  \label{fig:cover_toy_example}
\end{figure}

Two models A and B can cross over in their $\covtau$ curves: model A performs better at lower reliability thresholds, but B dominates at higher thresholds. To capture such cases where models trade dominance at different $\tau$ levels, we define $\text{AUC}^{+}_{cover}(A,B)$. This is the excess AUC between model's A coverage over model's B coverage across reliability thresholds:
$$\text{AUC}^{+}_{cover}(A,B) = \int_{0}^{1} \max\!\big(G_A(\tau) - G_B(\tau), \, 0\big)\, d\tau,
$$ where $G_M(\tau)$ denotes the $\covtau$ curve of model $M$.
This pairwise metric captures the total coverage advantage of model A relative to model B, ignoring regions where coverage of A is worse. In the right side of Figure~\ref{fig:cover_toy_example}, both models have equal $\text{AUC}^{+}_{cover}$, indicating a similar level of performance.

\section{Relating {\passk} and {\covtauunicode}} \label{theory}

We now demonstrate the connection between {\passk} and {\covtau}. Specifically, $\passk$ is a Beta-weighted average of the $\covtau$ metric, and thus represents only one particular projection of the richer information contained in $\covtau$.
\begin{proposition}[]\label{prop:pak_as_weighted}
$\passk$ as Weighted Average of $\covtau$. 

For any $k \geq 1$,
\[
\mathrm{\passk}
= \int_0^1 k(1-\tau)^{k-1}\, G(\tau)\, d\tau,
\]
where $G(\tau)$ denotes the $\covtau$ curve.
\end{proposition}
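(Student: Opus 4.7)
The plan is to express the per-task success probability $1-(1-p_i)^k$ as an integral over the threshold $\tau$, then swap the finite sum over tasks with the integral to pull out the empirical CDF term that defines $G(\tau)$. The whole argument is essentially a layer-cake/Fubini manipulation and should go through in a few lines; the only substantive observation is choosing the right antiderivative.

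Concretely, I would start from the identity
\begin{equation*}
1-(1-p)^k \;=\; \int_0^{p} k(1-\tau)^{k-1}\,d\tau,
\end{equation*}
which is immediate from the fundamental theorem of calculus applied to the antiderivative $-(1-\tau)^k$. Next I would rewrite the upper limit as an indicator on $[0,1]$:
\begin{equation*}
\int_0^{p_i} k(1-\tau)^{k-1}\,d\tau \;=\; \int_0^1 k(1-\tau)^{k-1}\,\mathbf{1}\{p_i\geq\tau\}\,d\tau.
\end{equation*}
This step is the bridge between Definition~\ref{def:pak} and the threshold-based indicator that appears inside $G(\tau)$ in Definition~\ref{def:cov}.

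Then I would plug this into Definition~\ref{def:pak}, interchange the finite sum over $T$ tasks with the integral (trivially justified since the sum is finite and the integrand is bounded), and recognize the task-averaged indicator as $G(\tau)$:
\begin{align*}
\mathrm{Pass@}k
&= \frac{1}{T}\sum_{i=1}^T\int_0^1 k(1-\tau)^{k-1}\mathbf{1}\{p_i\geq\tau\}\,d\tau \\
&= \int_0^1 k(1-\tau)^{k-1}\left(\frac{1}{T}\sum_{i=1}^T\mathbf{1}\{p_i\geq\tau\}\right)d\tau
= \int_0^1 k(1-\tau)^{k-1}\, G(\tau)\,d\tau.
\end{align*}

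There is really no hard step here; the closest thing to an obstacle is noticing that the weight $k(1-\tau)^{k-1}$ is exactly the density of a $\mathrm{Beta}(1,k)$ distribution on $[0,1]$, which is what licenses the paper's interpretation of $\passk$ as a Beta-weighted average of $\covtau$ and explains the low-$\tau$ bias (the density concentrates near $\tau=0$ as $k$ grows).
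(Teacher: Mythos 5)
Your proof is correct and follows essentially the same route as the paper's: apply the fundamental theorem of calculus to $1-(1-p)^k$ with derivative $k(1-\tau)^{k-1}$, rewrite the upper limit as an indicator $\mathbf{1}\{p_i\ge\tau\}$, and swap the finite sum over tasks with the integral to recognize $G(\tau)$. Nothing is missing; the Beta(1,k) observation you add is exactly the content of the paper's Corollary~\ref{cor:pak_as_expectation_under_beta}.
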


\begin{proof}
For a single task with success probability $p$, define $f_k(p)=1-(1-p)^k$.
Since $f'_k(p)=k(1-p)^{k-1}$ and $f_k(0)=0$, the fundamental theorem of calculus gives
\[
f_k(p) = \int_0^p f'_k(\tau)\, d\tau
= \int_0^1 k(1-\tau)^{k-1}\, \mathbf{1}\{p \ge \tau\}\, d\tau.
\]
Averaging over tasks,
\[
\passk = \frac{1}{T}\sum_{i=1}^T f_k(p_i)
= \int_0^1 k(1-\tau)^{k-1}\,
\Big(\tfrac{1}{T}\sum_{i=1}^T \mathbf{1}\{p_i \ge \tau\}\Big)\, d\tau.
\]
The term in parentheses is exactly $G(\tau)$, which yields the claim.
\end{proof}

\begin{corollary}[]\label{cor:pak_as_expectation_under_beta}
$\passk$ as Expectation under Beta Weights.

Let $\tau \sim \mathrm{Beta}(1,k)$ with density 
$p_k(\tau)=k(1-\tau)^{k-1}$ on $[0,1]$.
Then
\[
\passk
=\int_0^1 G(\tau)\, p_k(\tau)\, d\tau
=\mathbb{E}_{\tau\sim \mathrm{Beta}(1,k)}[\,G(\tau)\,].
\]
\end{corollary}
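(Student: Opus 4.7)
The plan is to derive the corollary as an essentially immediate consequence of Proposition~\ref{prop:pak_as_weighted} by recognizing the weight function $k(1-\tau)^{k-1}$ as the density of a Beta$(1,k)$ random variable on $[0,1]$. Once this identification is made, the claim is a restatement of the definition of expectation for a continuous random variable.

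First, I would invoke Proposition~\ref{prop:pak_as_weighted} to write $\passk = \int_0^1 k(1-\tau)^{k-1}\, G(\tau)\, d\tau$. Next, I would verify that $p_k(\tau) = k(1-\tau)^{k-1}$ is indeed the Beta$(1,k)$ density on $[0,1]$: the general Beta$(\alpha,\beta)$ density is $\tau^{\alpha-1}(1-\tau)^{\beta-1}/B(\alpha,\beta)$, and at $\alpha=1$, $\beta=k$ the Beta function evaluates to $B(1,k) = \Gamma(1)\Gamma(k)/\Gamma(k+1) = 1/k$, so the density reduces to $k(1-\tau)^{k-1}$, as required. As a sanity check, $p_k(\tau) \geq 0$ for $\tau \in [0,1]$ and $k \geq 1$, and $\int_0^1 k(1-\tau)^{k-1}\, d\tau = \left[-(1-\tau)^{k}\right]_0^1 = 1$, confirming it is a bona fide probability density.

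With this identification in hand, I would conclude by applying the definition of expectation: for $\tau \sim \mathrm{Beta}(1,k)$ and any measurable $G : [0,1] \to [0,1]$, one has $\mathbb{E}_{\tau \sim \mathrm{Beta}(1,k)}[G(\tau)] = \int_0^1 G(\tau)\, p_k(\tau)\, d\tau$. Substituting this into the expression from Proposition~\ref{prop:pak_as_weighted} yields the chain of equalities in the statement of the corollary. Measurability of $G$ is trivial since $G$ is a non-increasing step function of $\tau$ (it only jumps at the finitely many distinct values of the $p_i$), and it is bounded in $[0,1]$, so integrability against $p_k$ is automatic.

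There is essentially no obstacle here; the only potentially fiddly step is the Beta-function calculation $B(1,k)=1/k$, which is routine. The substantive content of the corollary is interpretive rather than technical: it recasts the analytic weighting of Proposition~\ref{prop:pak_as_weighted} in probabilistic language, making explicit that $\passk$ concentrates its weight near $\tau = 0$ as $k$ grows (since $\mathrm{Beta}(1,k)$ has mean $1/(k+1)$), which is what underpins the ``bias toward low-$\tau$ regions'' claim in the introduction.
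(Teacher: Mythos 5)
Your proposal is correct and matches the paper's (implicit) reasoning exactly: the paper states this corollary without a separate proof, treating it as an immediate rephrasing of Proposition~\ref{prop:pak_as_weighted} once $k(1-\tau)^{k-1}$ is recognized as the $\mathrm{Beta}(1,k)$ density. Your verification that $B(1,k)=1/k$ and that the weight integrates to one is the only computation needed, and your argument is complete.
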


\begin{corollary}[]\label{cor:auc_eq_pass1}
Uniform AUC Equals Pass@1.

The unweighted area under the $\covtau$ curve satisfies
\[
\int_0^1 G(\tau)\, d\tau = \frac{1}{T}\sum_{i=1}^T p_i = \mathrm{Pass@1}.
\]
\end{corollary}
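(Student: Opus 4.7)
The plan is to obtain the claim as an immediate specialization of Proposition~\ref{prop:pak_as_weighted} at $k=1$. First I would substitute $k=1$ into the weighting density $k(1-\tau)^{k-1}$ from Proposition~\ref{prop:pak_as_weighted}; it collapses to the constant function $1$ on $[0,1]$, so the right-hand side of that proposition becomes exactly $\int_0^1 G(\tau)\,d\tau$. Then I would evaluate the left-hand side at $k=1$ using Definition~\ref{def:pak}: $\mathrm{Pass@1} = \frac{1}{T}\sum_{i=1}^T (1-(1-p_i)) = \frac{1}{T}\sum_{i=1}^T p_i$. Chaining these two equalities yields the corollary.

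As a sanity check, and as a self-contained alternative in case one prefers not to invoke Proposition~\ref{prop:pak_as_weighted}, I would unfold $G(\tau)$ from Definition~\ref{def:cov} and swap the finite sum with the integral to get $\int_0^1 G(\tau)\,d\tau = \frac{1}{T}\sum_{i=1}^T \int_0^1 \mathbf{1}\{p_i \geq \tau\}\,d\tau$. The inner integral is the Lebesgue measure of the set $\{\tau \in [0,1] : \tau \le p_i\}$, which is $p_i$. This is the familiar layer-cake identity $\mathbb{E}[X] = \int_0^\infty \Pr(X \ge t)\,dt$ specialized to the empirical distribution of the $p_i$'s, where $X$ is bounded in $[0,1]$.

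There is no real obstacle: both routes are one-line calculations. The only minor care points are the boundary convention for the indicator ($\geq$ versus $>$), which is irrelevant under Lebesgue integration, and the exchange of a finite sum with the integral, which needs no justification beyond linearity.
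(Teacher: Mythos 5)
Your proof is correct and matches the paper's intent: the corollary is stated without an explicit proof precisely because it is the $k=1$ specialization of Proposition~\ref{prop:pak_as_weighted}, which is your primary route. Your alternative layer-cake computation is a fine self-contained check but adds nothing beyond the same one-line identity.
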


\begin{remark}[]\label{remark:asymp}
As $k \to \infty$, the weighting distribution $\mathrm{Beta}(1,k)$ concentrates at $\tau=0$. Therefore
\[
\lim_{k \to \infty} \passk = G(0^+),
\]
the fraction of tasks with nonzero success probability. If every $p_i > 0$, then $\passk \to 1$.
\end{remark}

\begin{proposition}[]\label{prop:cover_includes_pass}
{\covtau} dominance implies {\passk} dominance.

Given two models A and B, if $\covtau(A) \ge \covtau(B)$ for all $\tau \in [0,1]$, then
$\passk(A) \ge \passk(B)$ for every $K \ge 1$.

\end{proposition}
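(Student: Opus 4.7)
The plan is to derive this as an immediate consequence of Proposition~\ref{prop:pak_as_weighted}, which expresses $\passk$ as an integral of $G(\tau)$ against a pointwise nonnegative weight. So the strategy is a one-line monotonicity argument for integrals of nonnegative functions, applied pointwise in $\tau$.

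First I would invoke Proposition~\ref{prop:pak_as_weighted} for both models $A$ and $B$ to write
\[
\passk(A) - \passk(B) = \int_0^1 k(1-\tau)^{k-1}\bigl(G_A(\tau) - G_B(\tau)\bigr)\, d\tau.
\]
Next I would note that the weighting kernel $w_k(\tau) := k(1-\tau)^{k-1}$ is nonnegative on $[0,1]$ for every integer $k \geq 1$ (it is the density of a $\mathrm{Beta}(1,k)$ distribution, as recorded in Corollary~\ref{cor:pak_as_expectation_under_beta}). By hypothesis, $G_A(\tau) - G_B(\tau) \geq 0$ for every $\tau \in [0,1]$, so the integrand is nonnegative. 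Monotonicity of the Lebesgue (or Riemann) integral then yields $\passk(A) - \passk(B) \geq 0$, as required.

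There is essentially no obstacle here: the statement is structural, and all the work was already done in establishing the Beta representation of $\passk$. The only minor point worth mentioning is that the implication is strictly one-way, since the Beta$(1,k)$ kernel concentrates near $\tau=0$ for large $k$, so $\passk$ dominance does not in general recover $\covtau$ dominance; this asymmetry is precisely what motivates reporting $\covtau$ alongside $\passk$.
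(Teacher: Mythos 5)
Your proof is correct and follows essentially the same route as the paper: both apply Proposition~\ref{prop:pak_as_weighted} to write $\passk(A)-\passk(B)$ as the integral of the nonnegative kernel $k(1-\tau)^{k-1}$ against $G_A(\tau)-G_B(\tau)\ge 0$ and conclude by monotonicity of the integral. Your added remark on the one-way nature of the implication is a nice observation but not part of the proof itself.
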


\begin{proof}
Using proposition 1,
\[
\mathrm{\passk}(A)-\mathrm{\passk}(B)
= \int_0^1 k(1-\tau)^{k-1}\, (G_A(\tau)-G_B(\tau))\, d\tau,
\]
Since $k(1-\tau)^{k-1} \ge 0$ and $G_A(\tau)-G_B(\tau) \ge 0$, the integral is positive. Therefore $\mathrm{\passk}(A) \ge \mathrm{\passk}(B)$.
\end{proof}

\section*{Observations}

\paragraph{$\passk$ as a weighted summary of $\covtau$.}
Proposition~1 shows that $\passk$ is a weighted average of the $\covtau$ curve,
where the weights are given by a $\mathrm{Beta}(1,k)$ distribution.
In other words, $\passk$ summarizes only part of the information already contained in $\covtau$.

\paragraph{Bias toward low thresholds.}
The $\mathrm{Beta}(1,k)$ weighting heavily emphasizes small values of $\tau$ as $k$ grows.
Thus $\passk$ primarily captures whether tasks have \emph{any} nonzero success probability,
rather than how reliably they can be solved.
In the limit $k \to \infty$, $\passk$ collapses to the trivial statistic
“fraction of tasks with $p_i>0$.”

\paragraph{Uniform weighting recovers $\passone$.}
When $G(\tau)$ is weighted uniformly, the resulting area under the $\covtau$ curve equals $\passone$,
i.e.\ the average per-trial success probability.
This highlights that $\covtau$ naturally generalizes both Pass@1 and $\passk$.

\paragraph{{\covtau} ranking is more informative than {\passk}}
Proposition~2 shows that rankings based on {\covtau} imply the corresponding rankings based on {\passk}. However, the reciprocal is not true, as evidenced by Figure~\ref{fig:cover_toy_example}. Model A outperforms model B in terms of {\passk} for all K, but the {\covtau} curve reveals ordering differences across reliability levels that the {\passk} curve can hide.

\medskip
\noindent\textbf{Summary.}
$\covtau$ makes the coverage-reliability trade-offs explicit, avoids the degeneracy of large-$k$ behavior,
and reveals finer-grained rankings that {\passk} can obscure.

\section{Re-assesing generalization performance on math using Cover@tau}

\begin{figure}[h]
  \centering
  \includegraphics[width=1.\linewidth]{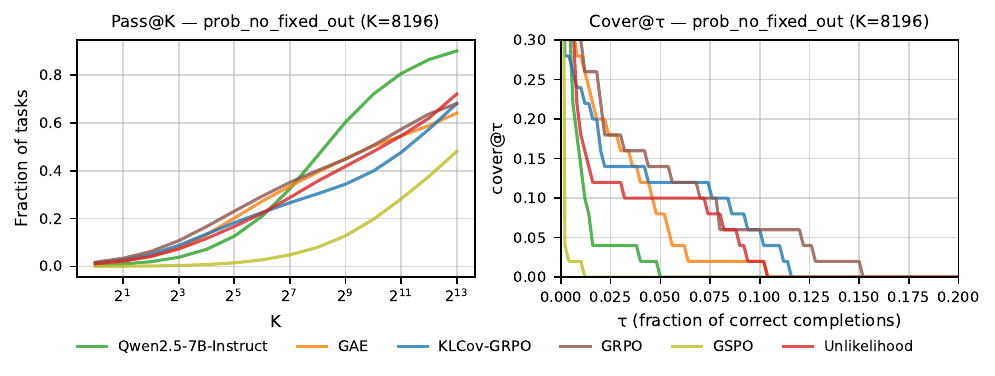}
  \caption{{\passk} and {\covtau} curves for Qwen2.5-7B-Instruct and RLVR models on the Probability (No Fixed) subset of OMEGA, for the OOD test split. \textbf{Left:} All models have poor accuracy, and increasing the sampling budget leads to higher $\passk$, especially on the base model. \textbf{Right:} GRPO and KL-Cov generalize the best; the base model quickly drops in performance even at low reliability thresholds, suggesting a far more limited reasoning boundary than the $Pass@k$ plot implies.}
  \label{fig:cover_tau_probs}
\end{figure}

We evaluate RLVR methods using Cover@{$\tau$}-based metrics, focusing on: (i) highlighting the trade-offs exposed by different $\tau$ values (ii) examining how models perform in mathematical reasoning under OOD settings.

\subsection{Experimental setup}

\paragraph{Datasets}Popular datasets for evaluating mathematical reasoning may suffer from data leakage \citep{wu2025reasoning}. As a result, training on these datasets may not accurately reflect improvements relative to the base model. We instead focus on two particular datasets: OMEGA \citep{sun2025omega} and Reasoning Gym \citep{stojanovski2025reasoning}. OMEGA provides both IID and OOD test splits across various tasks, from which we select: GCD, Function Intersection, Probability (No Fixed), Digit Sum, and Circle. For all tasks, we train on IID data and test on the OOD splits. For Reasoning Gym, we follow the intra-domain transfer setup: we train on two tasks from the algebra domain (simple equations and polynomial multiplication) and test on a third (intermediate integration).

\paragraph{Methods} We evaluate two commonly used RLVR algorithms: GRPO \citep{shao2024deepseekmath} and PPO \citep{schulman2017proximal}. In addition, we test GSPO \citep{zheng2025group}, which improves upon GRPO by defining importance ratios at the sequence level rather than the token level. Given our out-of-distribution setup, we also consider methods specifically designed to enhance exploration. KL-Cov \citep{cui2025entropy} applies a KL penalty to tokens to tokens with high covariance between their log probabilities and advantages, helping to prevent entropy collapse, while GRPO-Unlikeliness \citep{he2025rewarding} introduces an unlikeliness reward, which incentivizes correct but less probable solutions. 

\paragraph{Training details}
We train all models using the VERL framework \citep{sheng2024hybridflow}, starting from the Qwen-2.5-7B-Instruct model \citep{yang2025qwen3}. For the Omega tasks, we run training for 30 epochs with a batch size of 500 and a mini-batch size of 96. For Reasoning Gym, we train for 5 epochs, with a batch size of 64 and a mini-batch size of 32. We set PPO epochs to 1 for all experiments. The actor and critic use learning rates of $1\text{e}{-6}$ and $1\text{e}{-5}$, respectively. For the KL loss between the policy and base model, we use the following: 0.1 on the Probability (No Fixed) task, as well as on Reasoning Gym for the GRPO-Unlikeliness method, 0.05 for the other Omega tasks, and 0.01 for all other Reasoning Gym experiments. Across all methods, we use 32 (OMEGA) and 8 (Reasoning Gym) rollouts per prompt which, for GRPO-style methods, corresponds to the group size. We select the best performing checkpoint on the iid validation sets for evaluation. All experiments were performed on a single node with 8 NVIDIA H200 GPUs.

\subsection{Model analysis using the {\covtauunicode} curve}

The right side of Figure~\ref{fig:pass_at_k_example} illustrates the {\covtau} on the IID split of the Probability (no fixed) subset of OMEGA. Our metric highlights the brittleness of the base model: while it solves problems at very small thresholds, its coverage quickly decreases even at modest thresholds, such as $\tau$ = 0.2. KL-Cov consistently solves the most problems at almost all the reliability thresholds. 

In Figure \ref{fig:cover_tau_probs} we plot both the {\passk} and {\covtau} curves for the OOD split of the same dataset. All models have poor Pass@1 performance, but, for large sampling budgets, the base model significantly outperforms the RLVR models. For very small values of $\tau <0.01$, {\covtau} highlights a similarly strong performance of the base model. This correlates with the theoretical insights from Section \ref{theory}, showing that as $\tau$ approaches 0, it becomes similar to {\passk} at large $k$s.  However, when marginally increasing the threshold to $\tau = 0.025$, the performance of the base model drops significantly, which showcases a less optimistic view of its reasoning capabilities. Additionally, the {\covtau} curve reveals different trade-offs between the RLVR models. GAE solves more problems than GRPO-Unlikeliness for $\tau < 0.050$, while GRPO-Unlikeliness is more reliable for larger $\tau$ values. 

\paragraph{$\covtau$ offers finer granularity.}
By inspecting the entire curve, we can distinguish between algorithms that
(a) succeed rarely across many tasks, and
(b) succeed reliably on fewer tasks.
$\passk$ targets the former, while being uninformative about the latter. $\covtau$ exposes important trade-offs
between \emph{coverage} and \emph{reliability} of exploration.

\subsection{Results}

\begin{table}[t]
\caption{Results on the Intermediate Integration task from Reasoning Gym and the OMEGA benchmark (averaged across 5 tasks). Best results per metric are shown in bold, second and third best are highlighted in blue and orange, respectively.}
\label{omega_results}
\centering
\setlength{\tabcolsep}{2pt} %
\begin{tabular}{lcccccccc}
\hline
& \multicolumn{4}{c}{\textbf{Reasoning Gym}} & \multicolumn{4}{c}{\textbf{OMEGA OOD}} \\
\textbf{Method} & Pass@1 & cov@.2 & cov@.8 & $\mathrm{AvgAUC}^{+}_{\text{cov}}$ & Pass@1 & cov@.2 & cov@.8 & $\mathrm{AvgAUC}^{+}_{\text{cov}}$ \\
\hline
base     & 49.67  & 55.67 & 39.67 & 0.64 & 8.34 & 14.94 &  1.10 &  0.19 \\
GRPO         &\textbf{59.66} & \textbf{66.00} & \third{48.67}& \second{5.52} & 17.86 & \second{22.66} & 11.98 &  1.61 \\
GSPO         & 56.14& 57.67& 48.00&1.79 & \third{18.00} & 20.26 & \second{15.36} &  \second{2.36} \\
PPO (GAE)         &\third{57.94} &\third{60.00} & \second{55.67} & \third{5.13} & \second{18.38} & \second{22.66} & \third{13.68} & \third{1.85} \\
KL-Cov        & \second{58.55}& \second{62.00} & \textbf{56.33}& \textbf{5.70}& \textbf{28.34} & \textbf{33.58} & \textbf{23.34} & \textbf{12.78}  \\
Unlikeliness & 43.98& 51.67& 34.00& 0.00& 17.02 & \third{20.94} & 11.94 & 0.68 \\
\hline
\end{tabular}
\end{table}

In Table~\ref{omega_results} we evaluate several RLVR algorithms on the Reasoning Gym and OMEGA datasets. We report pass@1, as well as {\covtau} for $\tau=0.2$ and $\tau=0.8$, to assess both low and high reliability performance. While these point metrics provide insight into the performance at explicit reliability levels, they do not capture how models perform across the entire range of reliability thresholds. To summarize a model's performance across all $\tau$ regions, we build on the previously defined pairwise measure $\mathrm{AUC}^{+}_{\text{cover}}(A, B)$. Consider a set of models $\mathcal{M}$, with $|\mathcal{M}|=M$. For a model A, we average its pairwise excess AUC score against all the other models:
\begin{equation}
\mathrm{AvgAUC}^{+}_{\text{cover}}(A) \;=\; \frac{1}{M-1} \sum_{\substack{B \in \mathcal{M} \\ B \neq A}} 
    \mathrm{AUC}^{+}_{\text{cover}}(A,B)
\label{avgauc}
\end{equation}
$\mathrm{AvgAUC}^{+}_{\text{cover}}(A)$ quantifies how much A tends to dominate other models on the $\covtau$ curve.

In Table~\ref{omega_results} we observe that the rankings of the top methods differ between {\passone}, $\covtau$ and $\mathrm{AvgAUC}^{+}_{\text{cover}}$. While cover@0.2 yields the same rankings as {\passone}, with GRPO as the top method, cover@0.8 ranks KL-cov first, highlighting its strength in preserving depth at higher thresholds.
The $\mathrm{AvgAUC}^{+}_{\text{cover}}$ score (\ref{avgauc}) averages performance across dominant threshold regions: KL-Cov is ranked first, while GRPO ranks second.

KL-Cov obtains the best results on the OMEGA dataset on all metrics. While GRPO ranks 4th based on {\passone}, it ranks 2nd based on Cover@0.2 (tied with PPO), highlighting better coverage at low reliability.  Enforcing higher reliability ($\tau=0.8$) produces a different ordering compared to $\tau=0.2$, where breadth is emphasized. GRPO and GRPO-Unlikeliness show good {\covtau} performance at lower thresholds, but their coverage at $\tau=0.8$ drops out of the top 3. The $\mathrm{AvgAUC}^{+}_{\text{cover}}$ score (\ref{avgauc}) captures this trade-off: GRPO and GRPO-Unlikeliness lag behind GSPO and PPO, which are ranked 2nd and 3rd, respectively. 
Overall, the strong performance of KL-Cov suggests that RLVR methods that prevent entropy collapse show stronger generalization abilities.

\section{Limitations}
Our metric still accounts for the accuracy of the final answer, without evaluating the soundness of the reasoning trace, as was done in Cot-Pass@k\citep{wen2025reinforcement}. However, the accuracy of the reasoning trajectory can be combined with the {\covtau} metric.

\section{Conclusions}
We introduce $\covtau$ which emphasizes a \textit{reliability-controlled reasoning boundary}, by taking into account the ratio ${\tau}$ of correct completions for a given task. This fine-grained view of the performance can naturally highlight the trade-off between the coverage of solvable problems and the correct answer consistency. 

We connect {\passk} to {\covtau} and demonstrate that $\passk$ can be expressed as a weighted average of {\covtau}, that is biased towards low ${\tau}$ regions, making it prone to emphasizing lucky hits rather than reliability. Moreover, {\covtau} reveals ordering differences across reliability levels that {\passk} may hide. Through this new lens, we evaluate several RLVR methods to uncover their reasoning abilities under different reliability thresholds. 

\clearpage
\bibliography{iclr2026/main}
\bibliographystyle{iclr2026_conference}


\end{document}